\newtheorem{theorem}{Theorem}
\newtheorem{definition}{Definition}
\newtheorem{prob}{Problem}
\def\wgr{g}
\def\wg{G}
\def\wt{W}
\def\at{A}
\def\dw{w}
\def\ew{e}
\def\pw{p}
\def\pmin{P_{min}}
\def\war{\omega}
\def\cp{C}
\def\hw{H}
\def\wp{P}
\def\wgalgo{\textsc{Work4Food}\xspace}
\def\fmalgo{\textsc{FoodMatch}\xspace}
\def\ffalgo{\textsc{FairFoody}\xspace}
\setlist{nolistsep,leftmargin=*}
\newcounter{casenum}
\newenvironment{caseof}{\setcounter{casenum}{1}}{\vskip.5\baselineskip}
\newcommand{\case}[2]{\vskip.5\baselineskip\par\noindent {\bfseries Case \arabic{casenum}:} #1.~#2\addtocounter{casenum}{1}}
\title{Gigs with Guarantees: Achieving Fair Wage for Food Delivery Workers}
\author{
Ashish Nair\and Rahul Yadav\and Anjali Gupta\and Abhijnan Chakraborty\and Sayan Ranu\And Amitabha Bagchi\\
\affiliations
Indian Institute of Technology Delhi\\
\emails
\{Ashish.R.Nair.cs517, Rahul.Yadav.cs517, anjali, abhijnan, sayanranu, bagchi\}@cse.iitd.ac.in
%third@other.example.com,
%fourth@example.com
}
\begin{document}

\maketitle

\begin{abstract}
With the increasing popularity of food delivery platforms, it has become pertinent to look into the working conditions of the `gig' workers in these platforms, especially providing them fair wages, reasonable working hours, and transparency on work availability. However, any solution to these problems must not degrade customer experience and be cost-effective to ensure that platforms are willing to adopt them. We propose \wgalgo, which provides income guarantees to delivery agents, while minimizing platform costs and ensuring customer satisfaction. \wgalgo  ensures that the income guarantees are met in such a way that it does not lead to increased working hours or degrade environmental impact. To incorporate these objectives, \wgalgo  balances supply and demand by controlling the number of agents in the system and providing dynamic payment guarantees to agents based on factors such as agent location,  ratings, etc. We evaluate \wgalgo on a real-world dataset from a leading food delivery platform and establish its  advantages over the state of the art in terms of the multi-dimensional objectives at hand.
\end{abstract}

% \vspace{-0.10in}
\section{Introduction and Related Work}
Food delivery platforms like Swiggy, Zomato, GrubHub or Deliveroo have become an extremely popular choice among customers to order and get food delivered to them. Alongside offering increased business to the restaurants, they also provide a livelihood to thousands of delivery agents, who pick the ordered food from the restaurants and deliver them at the customers' doorsteps. In developing countries with high unemployment rates, despite the `gig' nature of delivery jobs, these platforms have become the only source of income for the majority of delivery agents~\cite{fairwork,foodDeliverySA}. However, a range of issues are presently plaguing the food delivery industry -- poor working conditions of the agents, pressure of on-time delivery while navigating heavy traffic, opaque job assignments, etc.~\cite{zhou2020digital}. Specifically, a major concern of the delivery agents is their inadequate income against the backdrop of rising cost of fuel and maintenance\footnote{A delivery agent typically gets a small delivery fee per order,  except occasional tips and incentives~\cite{foodDeliveryNewsMinute}.}, forcing many agents to go on repeated strikes to demand better pay~\cite{swiggyStrike,turkeyStrike}. In fact, a non-profit labor watchdog organization Fairwork ({\tt fair.work}) found that none of the food delivery platforms in India ensures legal minimum wage, even if an agent works for more than 10 hours a day %for the platform~
\cite{fairwork}.

\begin{figure}[t]
\centering
% \vspace{-0.20in}
\includegraphics[width=0.75\linewidth]{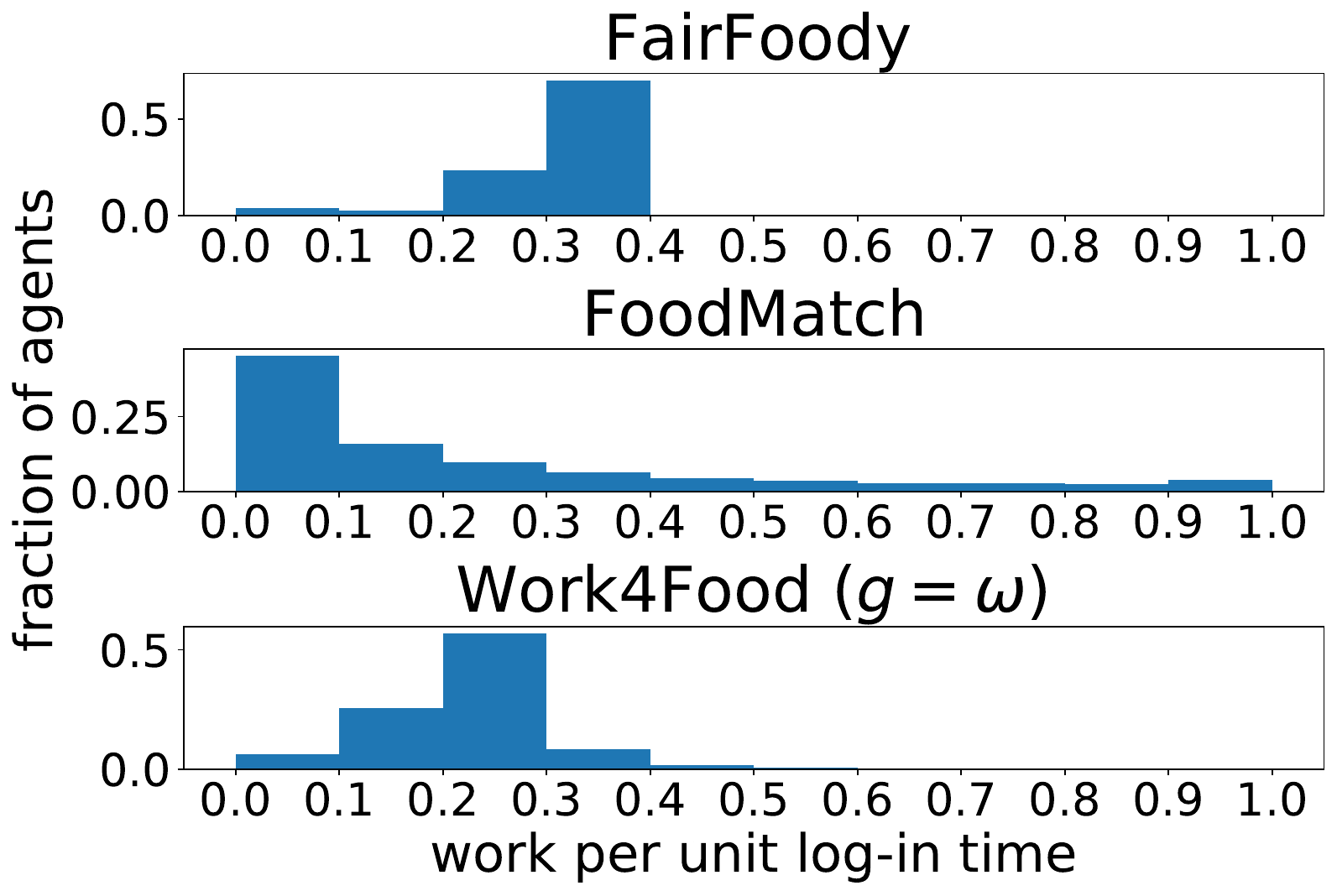}
% \vspace{-0.10in}
\caption{Work distribution in \ffalgo, \fmalgo and our proposal \wgalgo (at a specific configuration $\wgr=\war$ described in Section \ref{sec:work_guarantee}) in city B of the dataset. An agent is said to be working if it is servicing an order.} %only 1 day
\label{fig:work_distribution}
% \vspace{-0.20in}
\end{figure}

%\textcolor{blue}{International labor organization reported that deficient work is a prevailing problem for web-based crowdsourcing workers, leading to their income instability\cite{LabourOrg}}.
%The food delivery problem involves four of parties: customers, restaurants, delivery agents and a food delivery platform which assigns orders to delivery agents. Each of these parties have different objectives which are hard to optimize simultaneously. Customers want that their orders be delivered as quickly as possible. Delivery agents want that they be allotted enough and fair work. The platform would to minimize it's costs while ensuring customers are satisfied. 

Earlier works in this space have mostly attempted to minimize the order delivery time alone~\cite{Kottakki2020CustomerED,Ulmer2021TheRM}, by pre-placing delivery agents in different parts of the city anticipating demand~\cite{Xue2021OptimizationOR}, or by minimizing delivery agents' waiting time~\cite{Weng2021LaborrightPD}. \fmalgo~\cite{foodmatch_full,foodmatch} overcame many simplifying assumptions made in prior works (e.g., perfect order arrival information~\cite{article:mealDelivery}, neglecting the road network~\cite{mdrp} and food preparation time~\cite{10.14778/3368289.3368297}), and proposed a realistic and scalable solution. While \fmalgo minimized the delivery time, ~\cite{fairfoody} showed that such one-sided optimization leads to {\it unfair} work distribution among delivery agents, resulting in unequal incomes. They further proposed a multi-objective algorithm \ffalgo to provide {\it fair distribution of income opportunities} among the agents, while ensuring minimal increase in delivery times. To our knowledge, \ffalgo is the only prior work looking into the fairness of delivery agents' incomes.

%The earlier works in this space, \fmalgo  and   tried to optimize on delivery times and income fairness respectively. \fmalgo had very low delivery times but the  wasn't fair which in turn leads to  \ffalgo drastically improved on the income fairness score without impacting the delivery times much. 

\ffalgo tries to ensure that all agents get the same amount of work for every unit of time they are working for the platform. We plot the work distribution in \fmalgo and \ffalgo in Fig.~\ref{fig:work_distribution} when run on an identical stream of orders. In \ffalgo, all agents get concentrated around the same work per unit log-in time (0.3 -- 0.4). In sharp contrast, \fmalgo creates an allocation where some agents get most of the work, and a large number get very little work.

``Work'' on a food delivery platform consists of three components: (1) driving to the restaurant (\textit{first-mile}), (2) \textit{waiting} at the restaurant while food is being prepared, and (3) driving from restaurant to the customer's location (\textit{last-mile}). Note that the first-mile time and food preparation time can proceed in parallel. While \fmalgo pre-dominantly assigns an order to the closest delivery agent from the restaurant's location, \ffalgo assigns work to a relatively far-away driver so that the driver reaches the restaurant within the waiting time, and therefore, minimally affect the final delivery time. Through this design, the location of an agent plays a less central role in determining the chances of being allocated an order, and consequently provides \ffalgo more flexibility in attaining equitable work distribution.
  %\wgalgo also concentrates the work distribution (slightly to the left of \ffalgo) but does not increase the total work done by the agents.  
  
While the distribution of work certainly looks fairer in \ffalgo, there is a downside to this. As shown in Table~\ref{tab:fm_ff_intro}, \ffalgo makes agents work significantly more to deliver the same set of orders to ensure fairness. Thus, higher work to deliver the same set of orders increases the platform's cost in terms of delivery payments, fuel costs for agents, and has a negative environmental impact.
  
\begin{table}[t]
\centering
% \vspace{-0.10in}
\scalebox{0.7}{
\begin{tabular}{lrrr}
\toprule
\multirow{2}{*}{Property}  &\multirow{2}{*}{\ffalgo} & \multirow{2}{*}{\fmalgo} & \wgalgo\\
& & & $\wgr = \war$\\
\midrule
Avg. Delivery Time &  16.08 mins  & 15.91 mins &  16.68 mins \\
Gini Income/log-in Time & 0.09  & 0.63 &  0.11 \\
Avg. Work per Agent & 3.26 hrs & 2.31 hrs & 2.19 hrs\\
Cost (in 10E6 $\pw$ units) & 2.46 & 1.75 & 1.85 = 1.64 + 0.21\\
\bottomrule
\end{tabular}}
% \vspace{-0.10in}
\caption{Performance of \ffalgo, \fmalgo, and our proposal \wgalgo in city B of the dataset. An order's delivery time is the time between the order being placed and delivered. We measure inequality among agent incomes (per unit of log-in time) with Gini scores. The lower the Gini score, the better the fairness. $\pw$ is the pay of an agent per hour worked. Cost for \wgalgo includes handouts = 0.21 10E6 (Def. \ref{def:cp}). For \ffalgo and \fmalgo, cost here is the total payment for work.} 
\label{tab:fm_ff_intro}
% \vspace{-0.20in}
\end{table}
 
%The above analysis shows that equitable distribution of work comes at a cost. We therefore 
%To overcome this cost, 
In this context, we ask an important question: \textit{Can equitable distribution be achieved without the pitfalls of additional greenhouse emissions, higher cost to the delivery platform, and a more relaxed work distribution among agents?} We design an algorithm called \wgalgo, which shows that the above objective can indeed be achieved (See Fig.~\ref{fig:work_distribution} and Table~\ref{tab:fm_ff_intro}). Our key contributions are as follows:
\begin{itemize}
     \item {\bf Novel problem: } We propose a more realistic approach to provide fair incomes by ensuring that all agents get income guarantees based on government-mandated minimum wage. In addition, the proposed formulation provides sufficient control to the platform to decide the %sufficient
     number of agents to on-board so that the income guarantee can be met, which includes the provision of agent-personalized guarantees (\S~\ref{sec:formulation}).
     \item {\bf Algorithm design: } %\textbf{[Ashish/Rahul] Please write a brief 2-3 sentence summary. Take a look \fmalgo for an example.} 
     We design an algorithm called \wgalgo, which allows us to set a payment guarantee that is \textit{provably optimal} for minimizing cost for the platform. Powered by a novel combination of \textit{minimum weight bipartite matching} with \textit{Gaussian process regression}, \wgalgo analyzes the demand-supply dynamics in the system and generates an allocation that balances the \textit{triple} needs of minimizing delivery time, minimizing cost for the platform, and ensuring payment guarantees %are met 
     for the delivery agents (\S~\ref{sec:algo}).
     % are  and  first provide each agent with the same or different work guarantees based on minimum wage requirements, regression model predictions, or agent ratings when they on-board. We predict agent work based on parameters like order density and agent availability using a regression model which is trained offline and reject any agents that cannot meet their guarantees. We aggregate orders in a window and create a bipartite matching graph with active vehicles to perform assignments. The edge weight of the graph prioritizes underpaid agents and simultaneously minimizes platform costs.
     \item {\bf Evaluation: } We evaluate \wgalgo on a real food delivery dataset from %one of the largest food delivery aggregators in the world. 
     a large food delivery platform. Our experiments reveal that \wgalgo ensures the minimum wage guarantee with high probability, improves cost margin of the platform and achieves fairness without inflating work per unit time. More importantly, \wgalgo empowers policymakers with a tool that can be deployed to regulate this business (\S~\ref{sec:experiments}).
\end{itemize}

%Ensuring that all agents earn the same income may not always be the best idea as there may be more (/less) work available at certain times of the day and in specific locations. Also, the platform may wish to have 

%Our proposed algorithm \wgalgo can guarantee work to agents based on several parameters like order density, agent availability, and agent ratings while minimizing platform costs, providing transparency to agents, and maintaining reasonable delivery times. We also explore rejecting agents from on-boarding the system when sufficient work is not available to avoid agents from being underpaid. 

%\textcolor{blue}{[Need a final paragraph summarizing experimental results.]}
% \vspace{-0.10in}
\section{Preliminaries and Problem Formulation}
\label{sec:formulation}
% orders, agents, batching, road network, delivery time, order windows, 
The food delivery problem involves four parties: customers, restaurants, delivery agents, and the delivery platform (a.k.a. the `system'). The platform receives a stream of orders, which it then assigns to the agents. While doing so, we keep a few constraints in mind – agents have a fixed capacity to carry orders, and the platform does not assign orders that cannot get delivered within the service level agreement (SLA). The SLA is the maximum delivery time that the platform promises customers.
Each agent logs into the platform and informs how long they plan to stay. %The %delivery system works in different cities with platform has access to the road network data like travel times and restaurant locations in different cities.  
% \vspace{-0.05in}
\subsection{Dataset} 
\label{sec:dataset}
We use a licensed food delivery dataset released by~\cite{foodmatch}\footnote{An anonymized version of the proprietary dataset can be obtained from \url{https://www.cse.iitd.ac.in/~sayan/files/foodmatch.txt}. 
 % {\color{black} \url{ https://bit.ly/3HGdavB}}
 %https://drive.google.com/drive/folders/17e6ZXzD8e9lAdnBEL8aT09MLJDrv1Yh8?usp=sharing" or "https://bit.ly/3HGdavB".
 The name of the cities have not been revealed by the service provider.}. %It consists of six days of data provided by one of India's leading meal delivery service companies from three large metropolitan cities.
The dataset is sourced from a leading food delivery service in India and consists of eighteen days worth of delivery data from three large metropolitan cities. Table~\ref{tab:dataset} provides a summary of the same. The dataset comprises of: 
\begin{itemize}
\item The \textbf{Road network} of the three cities, obtained from {\tt OpenStreetMap.org} along with the average speed in each road segment at different times of the day.
\item \textbf{GPS pings} of delivery agents, which are \emph{map-matched} to the road network to obtain network-aligned trajectories~\cite{mapmatch}.
\item \textbf{Order information} such as customer and restaurant locations and %along with the 
average food preparation time in restaurants.
%\item Delivery vehicle trajectories.
%\item Data describing multiple factors such as vehicle IDs, average speed in each road segment at different hours, information on each received order like restaurant and customer locations, mean food preparation time in each restaurant, and so on. 
\end{itemize}
%We match the vehicle GPS pings to the road network to obtain network-aligned trajectories~\cite{mapmatch}.

\begin{table}[t]
\centering
% \vspace{-0.10in}
\scalebox{0.8} {
\begin{tabular}{lrrrrrr} 
    \toprule
    City & \# Rest- & \# Vehicles & \# Orders &  \# & \# &  Popu-\\ 
    & aurants & (avg./day) & (avg./day) & Nodes & Edges & lation\\
    \midrule
    A & $2085$ & $2454$ & $23442$ & 39k & 97k & $>$5M\\
    B  & $6777$ & $13429$ & $159160$ & 116k & 299k & $>$8M\\ 
    C & $8116$ & $10608$ & $112745$ & 183k & 460k & $>$8M\\  
 \bottomrule
\end{tabular}}
% \vspace{-0.05in}
\caption{Overview of the dataset.}
\label{tab:dataset}
% \vspace{-0.20in}
\end{table}
% \vspace{-0.10in}
\subsection{Formulation}
We now define some terms useful for our proposal. Table \ref{tab:notations} has a summary of the frequently used notations in the paper.

%Any allocation algorithm for solving the food delivery problem assigns incoming orders to available active agents. 
\begin{definition}[Delivery Times – EDT, SDT, XDT]
The {\em expected delivery time} $EDT(o,v)$ of an order $o$ is the expected time it would take to be delivered if it is assigned to agent $v$. 

The {\em shortest delivery time} $SDT(o)$ of order $o$ is the fastest it can be delivered if some vehicle could serve it with no wait time or other detour delays. It is the sum of the food preparation time and the shortest travel time between the restaurant and customer's location. 

The {\em excess delivery time} $XDT(o,v)$ of an order $o$ with respect to an agent $v$ is the difference between $EDT(o,v)$ and $SDT(o)$. We compute these delivery times using standard graph algorithms as in ~\cite{foodmatch}.
\label{def:edt_sdt_xdt}
\end{definition} 

\begin{prob}[Minimizing delivery time]
\label{prb:time}
Given a set of agents $V^t$ and unallocated orders $O^t$ at timestamp $t$, find an allocation of  orders to agents so that  $\sum_{\forall o\in O^t}XDT(o,A(o))$  is minimized. $A(o)$ denotes the agent allocated to order $o$.
\end{prob}
\fmalgo~\cite{foodmatch} studies the above problem and proposes an effective solution. 

\begin{definition}[Work and Active Times]
\label{def:work}
{\em Work time} $\wt_v^t$ is the time spent by a delivery agent $v$ till time $t$ servicing orders, i.e., either traveling or waiting for assigned orders. The {\em active time} $\at_v^t$ (in hours) of the agent is the time for which they log in to the system till time $t$. $\wt_v$ and $\at_v$ are the total {\em work time} and total {\em active time} of agent $v$ over their entire shift.
\end{definition}

In all of the models we compare, the agent incomes are proportional to their work times. So in order to guarantee income to agents, our new proposed algorithm guarantees that they get enough work. 

\begin{definition}[Work Guarantee Ratio]
\label{def:wgr}
The work guarantee ratio $\wgr_v$ of a delivery agent $v$ is the amount of work time guaranteed by the system to agent $v$ per unit active time.
% The {\em work guarantee ratio} $\wgr_v$ for a delivery agent $v$ is the fraction of time the platform guarantees that the agent will be working in the system for every unit of active time. 
Note a work guarantee naturally translates into an income guarantee since the payment is a function of the work.
\end{definition} 

\begin{definition}[Work Guarantee]
\label{def:wg}
The {\em work guarantee} $\wg_v^t$ for a delivery agent $v$ is the amount of work (in hours) the platform guarantees to the agent till time $t$, i.e., $\wg_v^t = \wgr_v \times \at_v^t$. The total {\em work guarantee}  $\wg_v$ of agent $v$ is given by $\wg_v = \wgr_v \times \at_v$.
\end{definition}

\begin{definition}[Work Payment, Handout and Platform Cost]
\label{def:cp}

All agents are paid the same rate $\pw$ for every hour worked in our models. So, the work payment $\wp_v$ for any agent $v$ is given by $\wp_v  =  \pw \times \wt_v$.

We envision a system where the platform hands out money to agents in lieu of any unmet work guarantees. So, the handout $\hw_v$ for any agent $v$ is given by:
% \vspace{-0.05in}
\begin{equation}
\label{eq:handout}
\hw_v  =  \pw \times \max(0,\wg_v - \wt_v)
\end{equation}

The {\em platform cost} $\cp$ is the total money the platform has to pay the agents for their work and any unsatisfied work guarantee. It is given by:
% \vspace{-0.10in}
\begin{equation}
\label{eq:cp}
\cp  =  \sum_v \pw \wt_v + \sum_v \hw_v
\end{equation}
\end{definition} 
\begin{table}[t]
\centering
% \vspace{-0.10in}
\scalebox{0.65}{
\begin{tabular}{cl}
\toprule
\textbf{Notation} & \textbf{Description}\\
\midrule
$\wt_v^{t}, \wt_v$ & Work time of agent $v$ till time $t$, total work time of agent $v$\\
$\at_v^{t}, \at_v$ & Active time of agent $v$ till time $t$, total active time of agent $v$\\
$\wgr, \wgr_v$ & Fixed work guarantee ratio for all agents, work guarantee ratio for agent $v$\\
$\wg_v^{t}, \wg_v$ & Work guarantee of agent $v$ till time $t$, total work guarantee of agent $v$\\
$\pw$ & Rate of pay for every hour worked\\
$\hw_v$ & Total handout of agent $v$ \\
$\wp_v$ & Total work payment of agent $v$ \\
$\cp$ & Platform cost \\
$\pmin$ & Hourly minimum wage guarantee\\
$\war$ & Ratio of total work time to total active time\\
$V^t$ & Set of active agents at window $t$\\
$O^t$ & Set of orders waiting for agent assignment at window $t$\\
$B^t$ & Set of batched orders waiting for agent assignment at window $t$\\
$\ew(v,b)$ & Weight of edge between agent $v$ and order batch $b$ in bipartite matching graph\\
$EDT(o,v)$ & The expected delivery time of order $o$ if it is assigned to agent $v$\\
$SDT(o)$ & The shortest delivery time of order o\\
$XDT(o,v)$ & The excess delivery time of order $o$ if it is assigned to agent $v$\\
\bottomrule
\end{tabular}}
% \vspace{-0.10in}
\caption{Summary of frequently used notations.} 
\label{tab:notations}
% \vspace{-0.20in}
\end{table}
\begin{prob}[Minimize Platform Cost]
\label{prb:cost}
Devise an order to vehicle allocation algorithm such that $\cp$ is minimized. Note that due to the handout component in $\cp$ (Eq.~\ref{eq:cp}), all agents are guaranteed to get their promised income. 
\end{prob}
In this work, %The proposed problem is therefore a \textit{multi-objective optimization} problem. 
we propose to not only %want to 
minimize food delivery time (Prob~\ref{prb:time}), but also the cost to platform (Prob~\ref{prb:cost}).

\begin{prob}[Work Guarantee Problem]
\label{prb:work4food}
Find order to vehicle allocation such that both platform cost $\cp$ and total excess delivery time are minimized while ensuring guaranteed income to all agents.
\end{prob}
The proposed problem is therefore a \textit{multi-objective optimization} problem. 
%This multi-objective optimization 
Moreover, Prob~\ref{prb:work4food} is NP-hard since Prob~\ref{prb:time} is NP-hard~\cite{foodmatch}. Hence, we explore heuristics in the form of bipartite matching.
  %The following section proposes our algorithm \wgalgo, which achieves this goal. %We later compare our results with \fmalgo and \ffalgo to analyze the advantages of using this new strategy.

% \vspace{-0.05in}
\section{\wgalgo: Proposed Algorithm}
\label{sec:algo}

\begin{comment}
\begin{figure}
\centering
% \vspace{-0.10in}
\includegraphics[width=3.5in]{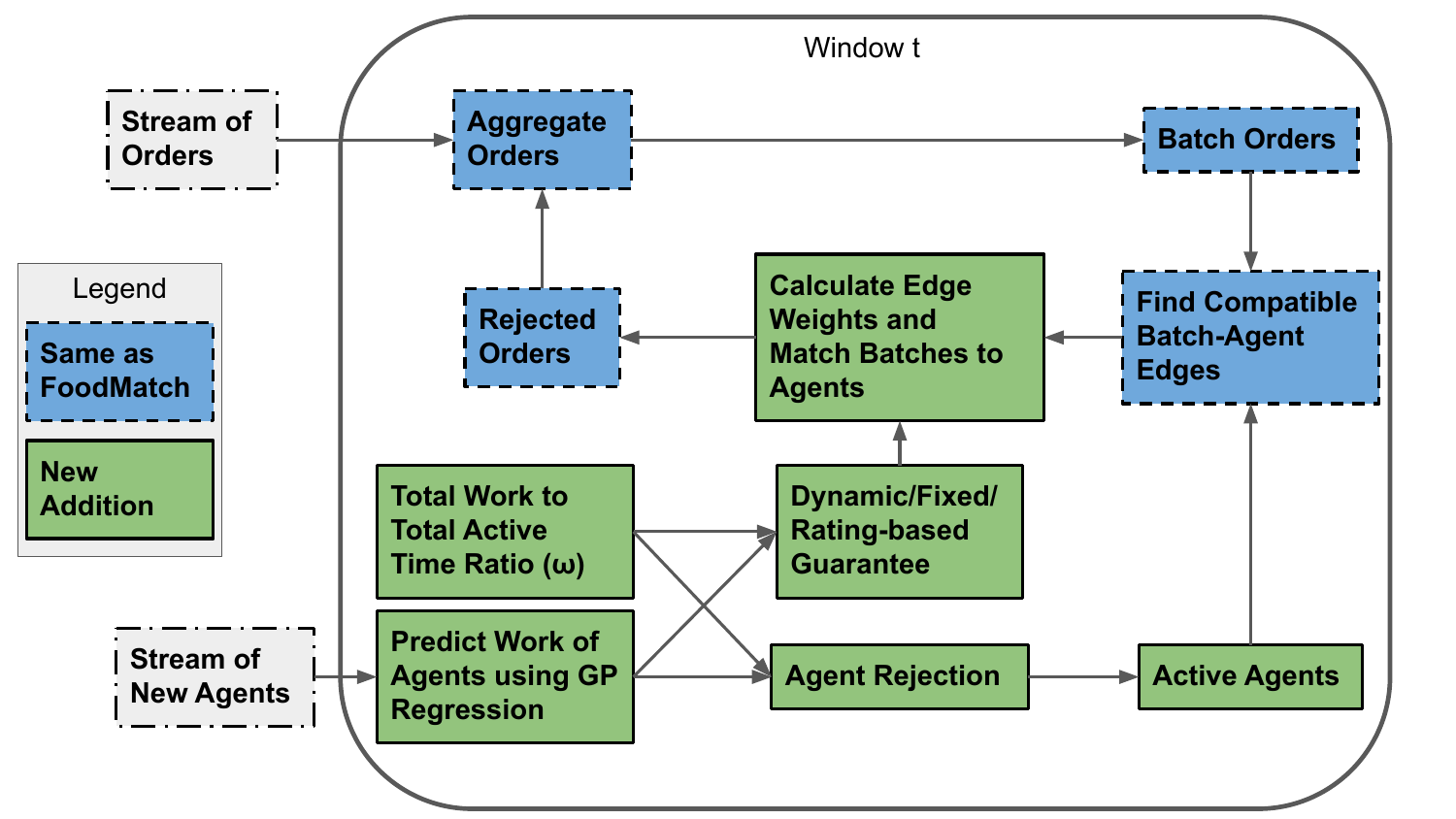}
% \vspace{-0.20in}
\caption{Algorithm Flowchart of \wgalgo.} %only 2 days
\label{fig:flow_chart}
% \vspace{-0.20in}
\end{figure}
\end{comment}

\noindent
 \wgalgo aggregates all incoming orders in a window of size $\Delta = 3$ minutes and any unassigned orders from past windows. As in \fmalgo, we batch orders together if they can be picked and delivered efficiently by the same agent. We use the same batching algorithm as in \fmalgo. We then assign each batch to an active agent. 
 The pseudocode of \wgalgo is provided in Alg.\ref{alg:wgalgo}.
 \begin{algorithm}[t]
    \caption{\wgalgo}
    \label{alg:wgalgo}
    {\scriptsize
    \textbf{Input}: Order stream $O$, Agents and their active intervals $V$\\
    \textbf{Parameter}: Work guarantee $\wgr$
    \begin{algorithmic}[1]
    \STATE unassignedOrders $\leftarrow \emptyset$
    \FOR{each window $t$}
    \STATE $O^t \leftarrow$ GetNewOrders$(t)$ $\cup$ unassignedOrders.
    \STATE $V^t \leftarrow$ GetActiveAgents$(t)$.
    \IF{agent rejection or dynamic guarantee is on} 
      \FORALL{$v \in V^t - V^{t-1}$}
        \STATE $v$.predictedPay $\leftarrow$ GPRegression$(v,t)$
      \ENDFOR
    \ENDIF
    \IF{agent rejection is on} 
      \FORALL{$v \in V^t \setminus V^{t-1}$}
        \IF{$v$.predictedPay $< \wgr\at_v$}
          \STATE Reject agent $v$ since payment guarantee cannot be met.
        \ENDIF
      \ENDFOR
    \ENDIF
    \FORALL{$v \in V^t$}
      \IF{dynamic guarantee is on}
        \STATE $\wgr_v \leftarrow $v$.predictedPay/\at_v$
      \ELSE
        \STATE $\wgr_v \leftarrow \wgr$
      \ENDIF
    \ENDFOR
    \STATE $B^t \leftarrow$ BatchOrders($O^t$).
    %\STATE edges $\leftarrow$ CheckConstraints$(B^t, V^t)$.
    %\STATE $\forall (v,b) \in edges$, $e(v,b) \leftarrow$ GetEdgeWeight$(v,b)$.
    \STATE $\mathcal{B}\gets$Complete bipartite graph $(B^t,V^t)$. Edge weights defined as in Eq.~\ref{eq:ew}.
    \STATE assignments $\leftarrow$ Hungarian$(\mathcal{B})$.
    \STATE unassignedOrders $\leftarrow O^t - $assignments.orders
    \ENDFOR
    \end{algorithmic}}
 \end{algorithm}
%  The pseudocode of \wgalgo is provided in Alg.\ref{alg:wgalgo} in the supplementary.% Figure \ref{fig:flow_chart} highlights the new additions in \wgalgo compared to \fmalgo.

% \vspace{-0.05in}
\subsection{Order Matching}
To assign orders to agents, we perform matching on a \textit{weighted bipartite graph} between order batches $B^t$ and active vehicles $V^t$ in the current window using the \textit{Hungarian} algorithm~\cite{kuhn1955hungarian,munkres1957algorithms}. 

\noindent
{\bf Edge weights in matching graph:} The weight of the edges in the bipartite matching graph at time $t$ between order batches $b \in B^t$ and vehicles $v \in V^t$ is given by
% \vspace{-0.05in}
\begin{equation}
\label{eq:ew}
    \ew(v,b)  =  
    \begin{cases} 
        max\left\{\wt_v^t + \dw_v^b-\wg_v^t,0\right\} & \wg_v^t > \wt_v^t \\
        \dw_v^b & \wg_v^t \leq \wt_v^t
    \end{cases}
\end{equation}
$\dw_v^b$ denotes the extra work agent $v$ will do to deliver batch $b$. The edge weight captures the additional cost to deliver batch $b$ using agent $v$ given the work guarantee. Any work below the work guarantee comes at no extra cost to the platform, as it needs to pay unmet guarantees through handouts. Thus, the edge weights equal the extra work beyond the guarantee. 

The above edge weight does not explicitly optimize delivery times, so we add an extra term corresponding to \fmalgo's edge weight~\cite{foodmatch}.

If $O_v$ is the batch (set) of orders already being carried by agent $v$, then the edge weight in \fmalgo is given by
\begin{equation}
\label{eq:ew_fm}
    \ew_{fm}(v,b)  =  \sum_{o \in b} XDT(o,v) + \sum_{o' \in O_v} \Delta_{XDT}(v,o',b)
\end{equation}

where $\Delta_{XDT}(v,o',b)$ is the change in $XDT$ of order $o' \in O_v$ if batch $b$ is assigned to agent $v$.

\begin{theorem}[Equivalence to \fmalgo]
When batching is turned off, i.e., each agent has a capacity of only one order, and the platform provides no work guarantee to the agents, then \fmalgo and \wgalgo compute the same batch-agent assignments.
\end{theorem}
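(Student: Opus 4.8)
The plan is to reduce both algorithms' edge weights to the same objective, up to a transformation that leaves the optimal matching fixed, and then invoke invariance of minimum-weight bipartite matching. First I would simplify the \wgalgo{} weight under the hypotheses. With no work guarantee we have $\wgr_v = 0$, so $\wg_v^t = \wgr_v \times \at_v^t = 0 \le \wt_v^t$ for every agent and window (work time is nonnegative). Hence every edge falls in the second branch of Eq.~\ref{eq:ew}, giving $\ew(v,b) = \dw_v^b$, and the full \wgalgo{} weight becomes $\dw_v^b + \ew_{fm}(v,b)$. Turning off batching forces each batch to be a singleton $b = \{o\}$, and since every agent eligible for assignment has capacity one and is therefore idle, $O_v = \emptyset$. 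The second sum in Eq.~\ref{eq:ew_fm} then vanishes, so $\ew_{fm}(v,b) = XDT(o,v)$, which is exactly \fmalgo{}'s weight.

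The key step is to relate the extra work $\dw_v^b$ of a single free delivery to $XDT(o,v)$. I would decompose the work into first-mile travel $f$, waiting at the restaurant, and last-mile travel $\ell$, observing that the agent waits only for the residual preparation time, so $\dw_v^b = \max(f,\tau) + \ell$, where $\tau$ is the food-preparation time. By Definition~\ref{def:edt_sdt_xdt} the same timing model gives $SDT(o) = \tau + \ell$ and $XDT(o,v) = \max(f,\tau) - \tau$, whence $\dw_v^b - XDT(o,v) = \tau + \ell = SDT(o)$ — a quantity depending only on the order, not on the agent. Consequently the \wgalgo{} edge weight equals $2\,XDT(o,v) + SDT(o)$, i.e.\ a positive rescaling of \fmalgo{}'s weight shifted by a per-order constant.

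Finally I would appeal to matching invariance. Both algorithms run the Hungarian algorithm, minimizing the total edge weight over matchings of the common complete bipartite graph $(B^t,V^t)$. Passing from $XDT(o,v)$ to $2\,XDT(o,v) + SDT(o)$ multiplies every weight by a fixed positive constant and adds a per-order offset $SDT(o)$; as long as both runs match the same set of orders, $\sum_o SDT(o)$ over the matched orders is a constant and positive scaling preserves the $\arg\min$, so the two minimizing matchings coincide. This yields identical batch-agent assignments in each window, and hence over the whole stream.

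The hard part will be the two implicit assumptions behind this clean reduction. First, the work/$XDT$ decomposition rests on the precise timing model — that waiting exactly absorbs the gap between arrival and food readiness, and that $\ell$ and $\tau$ are agent-independent; I would extract these facts explicitly from the $EDT$/$SDT$ definitions and check that any pre-assignment delay is also order-only, so that $\dw_v^b - XDT(o,v)$ remains agent-independent even in a more refined model. Second, the per-order offset is harmless only when the two runs match the same set of orders, which is guaranteed when there are at least as many agents as orders (every order is served); if agents are scarce the offset $SDT(o)$ could in principle change which orders are deferred to \texttt{unassignedOrders}, so I would either restrict the statement to that regime or argue that the matched set is forced by SLA feasibility and matching cardinality.
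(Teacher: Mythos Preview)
Your argument is correct and lands on the same invariance principle the paper uses: the two edge weights differ only by an agent-independent (order-only) additive term, so minimum-weight bipartite matching returns the same assignment. The paper's own proof is terser and slightly different in two respects worth flagging.

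First, the paper simply \emph{asserts} $\dw_v^b = EDT(o,v)$ without your first-mile/wait/last-mile decomposition; your breakdown $\dw_v^b = \max(f,\tau)+\ell$ actually derives that identity and makes explicit the timing assumptions (agent-independent last mile $\ell$, waiting absorbing only the residual preparation time). So your route is the same conclusion with more justification.

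Second, the paper's proof treats Eq.~\ref{eq:ew} \emph{alone} as the \wgalgo{} weight, i.e.\ it does not fold in the extra $\ew_{fm}$ term that the text says is added afterward. You do include it, which is why you end up with the factor of $2$ and need the positive-scaling observation in addition to the additive-offset one. Both versions go through for the same reason, but be aware that the paper's statement and proof are implicitly about the unaugmented weight; your version is, if anything, a mild strengthening.

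Finally, your caveats about pre-assignment delay and about the matched order set when $|V^t|<|B^t|$ are legitimate and are simply not addressed in the paper's proof; the paper tacitly assumes the matched set is the same in both runs. Your proposed fix (restrict to $|V^t|\ge|B^t|$ or argue the matched set is forced) is the right way to close that gap.
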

% \textsc{Proof.}
\begin{proof}
When batching is turned off, then each batch has only one order, i.e., $b=\{o\}$. Further, if the platform provides no work guarantee to the agents, Eq.~\ref{eq:ew} reduces to:
% \vspace{-0.05in}
\begin{equation}
\label{eq:ew_equiv}
  \ew(v,b) = \dw_v^b = EDT(o,v)
\end{equation}
Also, if each agent's capacity is one, Eq.~\ref{eq:ew_fm} reduces to
% \vspace{-0.05in}
\begin{alignat}{2}
\label{eq:ew_fm_equiv}
     \ew_{fm}(v,b) &= \sum_{o \in b} XDT(o,v) = XDT(o,v) \\
     &= EDT(o,v) - SDT(o)
\end{alignat}
Since $SDT(o)$ does not depend on the assignment, finding the minimum weight maximal matching with Eq.~\ref{eq:ew_equiv} and Eq.~\ref{eq:ew_fm_equiv} leads to the same assignment.
\end{proof}
% $\hfill\square$
% \vspace{-0.05in}
\subsection{Setting the Work Guarantee Ratio} 
\label{sec:work_guarantee}
A key component in determining the edge weights of the bipartite graph is the work guarantee to be provided to delivery agents (Eq.~\ref{eq:ew}). This guarantee needs to be determined based on the demand-supply dynamics. To determine this guarantee, we initiate our analysis under the equitability assumption that the same guarantee is provided to all agents, i.e., the guarantee is a function of only the demand-supply and not attributes of the agent. We discuss agent-personalized guarantee in the subsequent section.%the guarantee is a function of only the dethe same guarantee is provided to all \wgalgo's work-guarantee ratio parameter $\wgr_v$ determines the work guarantee for a particular agent $v$. 
%Different agents can have the same or different work guarantee ratios. The factors determining $\wgr_v$ for an agent $v$ include the government-mandated minimum wage, available work, and available agents. Factors like agent ratings and efficiency, which are not part of our dataset, could also be used. A particular case of a fixed work guarantee is to promise no work at all. In this case, we can minimize the total work payment using \wgalgo.

%\subsubsection{Fixed work guarantee}
%The fixed work guarantee setting, wherein all agents get the same hourly work guarantee, is used to ensure we meet government-mandated wage guarantees. 
Let us denote the fixed work guarantee ratio by $\wgr$ (Def.~\ref{def:wgr}), pay per hour worked by $p$ and, the hourly minimum wage guarantee by $\pmin$. Then, we want $ \pw \times \wgr \geq \pmin$. For every $\wgr$, the platform can set an appropriate pay per hour worked $p = p(g) = {\pmin}/{\wgr}$ such that the wage guarantee is met with minimum cost.

\begin{theorem}[Optimal value of $\wgr$]
\label{thm:war}
If $\war = \sum_v\wt_v/\sum_v\at_v$ is the ratio of the total work time to the total active time of the system with the current set of orders and agents, then we claim that $\wgr=\war$ minimizes the platform cost $\cp$.
\end{theorem}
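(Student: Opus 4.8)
The plan is to write the platform cost $\cp$ explicitly as a function of the single free variable $\wgr$ and then minimize. Substituting the pay rate $\pw = \pmin/\wgr$ (the smallest rate meeting the wage constraint $\pw\,\wgr \ge \pmin$) and the per-agent guarantee $\wg_v = \wgr\,\at_v$ into Eq.~\ref{eq:cp}, and combining each agent's work payment with its handout via the identity $\wt_v + \max(0,\wg_v-\wt_v) = \max(\wt_v,\wg_v)$, the cost collapses to a clean form:
\begin{equation*}
\cp(\wgr) = \pw\sum_v \max(\wt_v,\wg_v) = \pmin\sum_v \max\!\left(\frac{\wt_v}{\wgr},\,\at_v\right).
\end{equation*}
This already isolates the tension at the heart of the theorem: shrinking $\wgr$ inflates the pay $\pw$, while growing $\wgr$ inflates the guarantees and hence the handouts.

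Next I would analyze the right-hand side term by term. Each summand $\max(\wt_v/\wgr,\at_v)$ is strictly decreasing in $\wgr$ while $\wgr < \wt_v/\at_v$ and flat (equal to $\at_v$) once $\wgr \ge \wt_v/\at_v$, so every term, and hence the whole sum, is non-increasing in $\wgr$ with a single kink per agent at $\wgr = \wt_v/\at_v$. Invoking the equitability assumption, under which the allocation spreads work in proportion to active time, i.e.\ $\wt_v = \war\,\at_v$ for every agent, all of these kinks coincide at exactly $\wgr = \war$. Consequently $\cp(\wgr)$ is strictly decreasing on $(0,\war)$ and constant, equal to $\pmin\sum_v\at_v$, on $[\war,\infty)$, so the minimum is first attained at $\wgr=\war$. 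This matches a direct two-case check: for $\wgr\le\war$ no guarantee binds and $\cp = \pw\sum_v\wt_v = \pmin\war\sum_v\at_v/\wgr$, while for $\wgr\ge\war$ every guarantee binds and $\cp = \pw\sum_v\wg_v = \pmin\sum_v\at_v$.

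Two points deserve care and I would address them explicitly. First, $\wgr=\war$ is the smallest minimizer rather than the unique one, since any $\wgr\ge\war$ yields the same cost $\pmin\sum_v\at_v$; I would argue that $\war$ is the correct choice because it is the largest guarantee fully backed by real work (zero handouts), whereas larger values merely pay for work never performed. Second, and this is the main obstacle, $\war$ and the individual $\wt_v$ are not genuinely independent of $\wgr$: the guarantee enters the matching edge weights in Eq.~\ref{eq:ew} and thus feeds back into the realized work times. I would handle this by treating the total work (demand) as fixed by the order stream and appealing to the fact that at $\wgr=\war$ the edge weights drive the matching toward the proportional allocation $\wt_v=\war\,\at_v$, making the equitability assumption self-consistent at the claimed optimum; the non-equitable regime, where the kinks scatter and the true minimizer slides toward $\max_v \wt_v/\at_v$, I would flag as precisely the case the agent-personalized guarantee of the next section is designed to absorb.
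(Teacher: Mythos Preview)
Your argument is correct and tracks the paper's proof closely: both substitute $\pw=\pmin/\wgr$, impose an idealization on the allocation, and split into the two regimes $\wgr\le\war$ (cost strictly decreasing, no handouts) and $\wgr>\war$ (cost constant at $\pmin\sum_v\at_v$), then select $\wgr=\war$ as the smallest minimizer. The only minor differences are that you reach the case split via the tidy identity $\wt_v+\max(0,\wg_v-\wt_v)=\max(\wt_v,\wg_v)$ and a per-agent proportionality assumption $\wt_v=\war\at_v$, whereas the paper states the idealization at the aggregate level (enough total work $\Rightarrow$ every guarantee met; scarce total work $\Rightarrow$ no agent exceeds its guarantee); these are equivalent for the purpose of the proof.
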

% \vspace{-0.05in}
% \textsc{Proof.}
\begin{proof}
This choice of $\wgr$ is explained by analysing the platform cost $\cp$ for different $\wgr$ values. We can rewrite Eq.~\ref{eq:cp} as:
% \vspace{-0.10in}
\begin{equation}
\label{eq:cpg}
\cp(\wgr) = \frac{\pmin}{\wgr} \sum_v \wt_v + 
        \frac{\pmin}{\wgr} \sum_v \max(0,\wgr \at_v - \wt_v)
\end{equation}    

We consider two cases, $\wgr \leq \war$ and $\wgr > \war$ assuming \wgalgo works ideally, i.e., (i) the total work in the system  $\sum_v \wt_v$ does not depend on $\wgr$; (ii) $\wgalgo$ meets guarantees whenever there is sufficient work; (iii) when work is scarce, agents are not assigned more work than the guarantee as agents with unmet guarantees are preferred by Eq.~\ref{eq:ew}.
% \vspace{-0.10in}
\begin{caseof}
    \case{$\wgr \leq \war$}{
    Here, there is enough work available for all the agents to be able to meet the fixed work guarantee since $\sum_v \wgr \at_v \leq \war \sum_v \at_v = \sum_v \wt_v$. So, the second term in Eq.~\ref{eq:cpg} would be equal to zero and we have $\cp(\wgr) \propto 1/\wgr$. Hence, $\cp(\wgr)$ is minimized when $g=\war$.
    }
    \case{$\wgr > \war$}{
    As there is not enough work for all guarantees to be met, agents will not cross their guarantees, and Eq.~\ref{eq:cpg} would reduce to:
    % \vspace{-0.05in}
    \begin{alignat}{2}
    \label{eq:cpg2}
    \cp(\wgr)_{|\wgr>\war}& = \frac{\pmin}{\wgr} \sum_v \wt_v + 
            \frac{\pmin}{\wgr} \sum_v (\wgr \at_v - \wt_v)\\
    \label{eq:cpg3}
     &= \pmin \sum_v \at_v
    \end{alignat} 
    So, if $\wgr > \war$ then, $\cp$ is independent of $\wgr$. However, in this case, the distribution of work assigned to the agents may be unequal even though they end up earning equally (proportional to their active times). This is not desirable. Hence, we set $\wgr = \war$.
    % $\hfill\square$
    }
\end{caseof}
\end{proof}

\begin{figure}[t]
\centering
\includegraphics[width=1\linewidth]{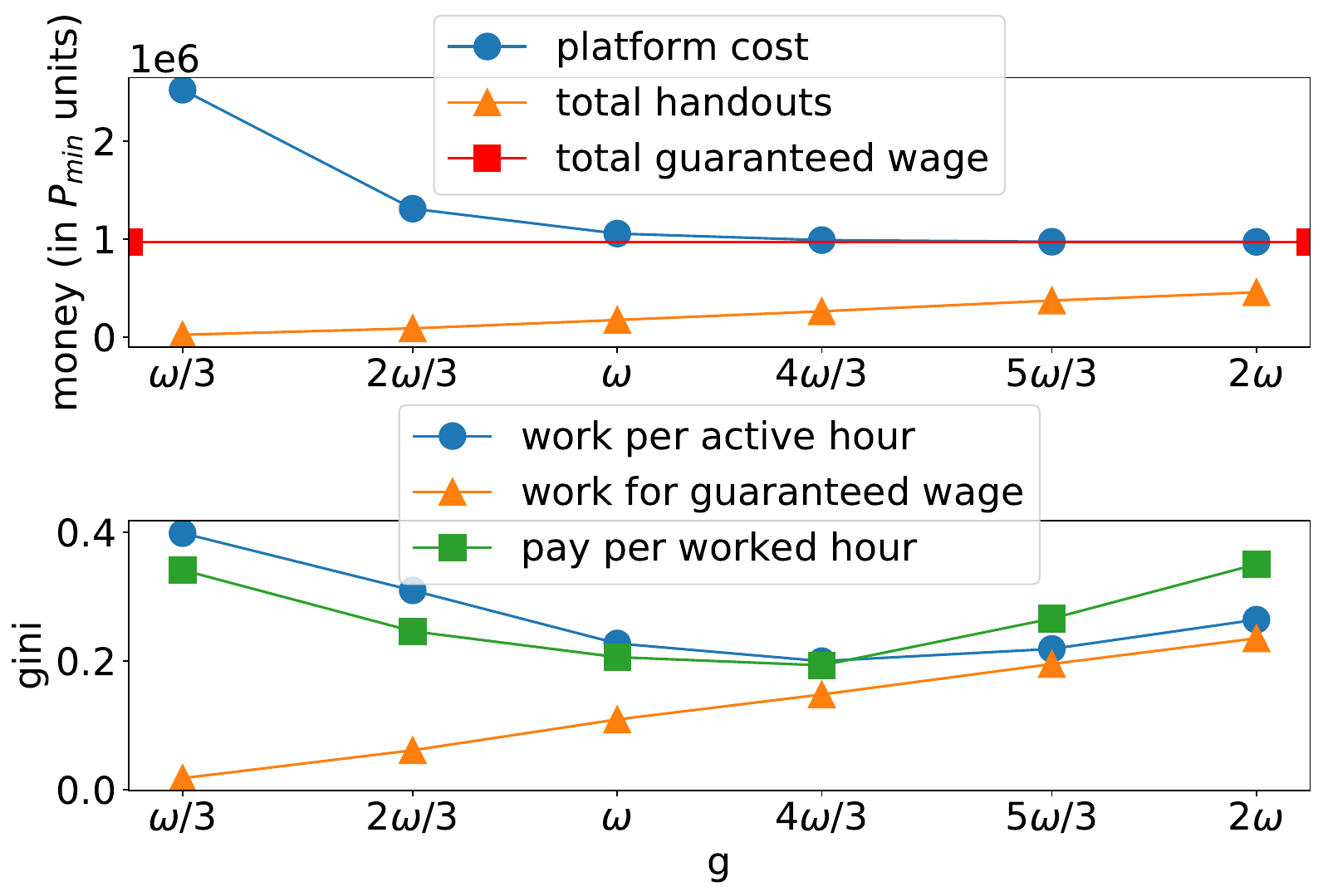}
\caption{Platform cost and Gini score v/s $\wgr$ in city A.} %only 2 days
\label{fig:cost_vs_g}
\end{figure}
% \subsection{Impact of $\wgr$}
% \label{app:guarantee}

Fig.~\ref{fig:cost_vs_g} shows the graph of various measures as a function of work guarantee $\wgr$ for city A. The platform cost behaves as we argued in Theorem \ref{thm:war}, falling initially and then stagnating at $\wgr=\war$. On the other hand, the total handouts keep rising as $\wgr$ increases. We also analyze the fairness with respect to (i) work provided per unit active time, (ii) work done to get the guaranteed wage (some agents may achieve guaranteed wage with the help of handouts), and (iii) pay (including handouts) for every unit of time worked. We can see that these fairness measures get affected if the guarantee is not in sync with the total work in the system, i.e., when $\wgr$ is away from $\war$. This happens because the algorithm does not try to distribute work among agents above the guarantee fairly (when $\wgr < \war$) or among agents below the guarantee (when $\wgr > \war$). However, if the guarantee is close to the available work, agents have little leeway to work more or less than the guarantee, as explained in the two case analyses in Theorem \ref{thm:war}.
% An empirical substantiation of the above theorem is provided in App.~\ref{app:guarantee} in the supplementary.
% \vspace{-0.05in}
\subsection{Estimating $\war_v$ for Each Agent}
From Theorem~\ref{thm:war}, setting $\wgr=\war$ is the optimal choice when all agents get the same guarantee. However, the total work time to total active time varies with time of the day (See Fig.~\ref{fig:war_hour}) since it is a function of several variable factors such as the location of agents, number of agents in the system, and order-density. Hence, it is important to \textit{predict} $\war_v$, i.e., the ratio of expected work to active time when an agent $v$ wants to on-board. Towards that end, we use \textit{Gaussian Process Regression}~\cite{gaussianprocessregression} to model available work $\wt_v$ of agent $v$. As input, the agent provides its active time $\at_v$ on joining the system. So, we have $\war_v = \wt_v/\at_v$ and the work guarantee ratio is set as $\wgr_v = \war_v$ to provide a dynamic guarantee.

%\subsubsection{Dynamic work guarantee}

\begin{figure}
\centering
% \vspace{-0.20in}
\includegraphics[width=0.75\linewidth]{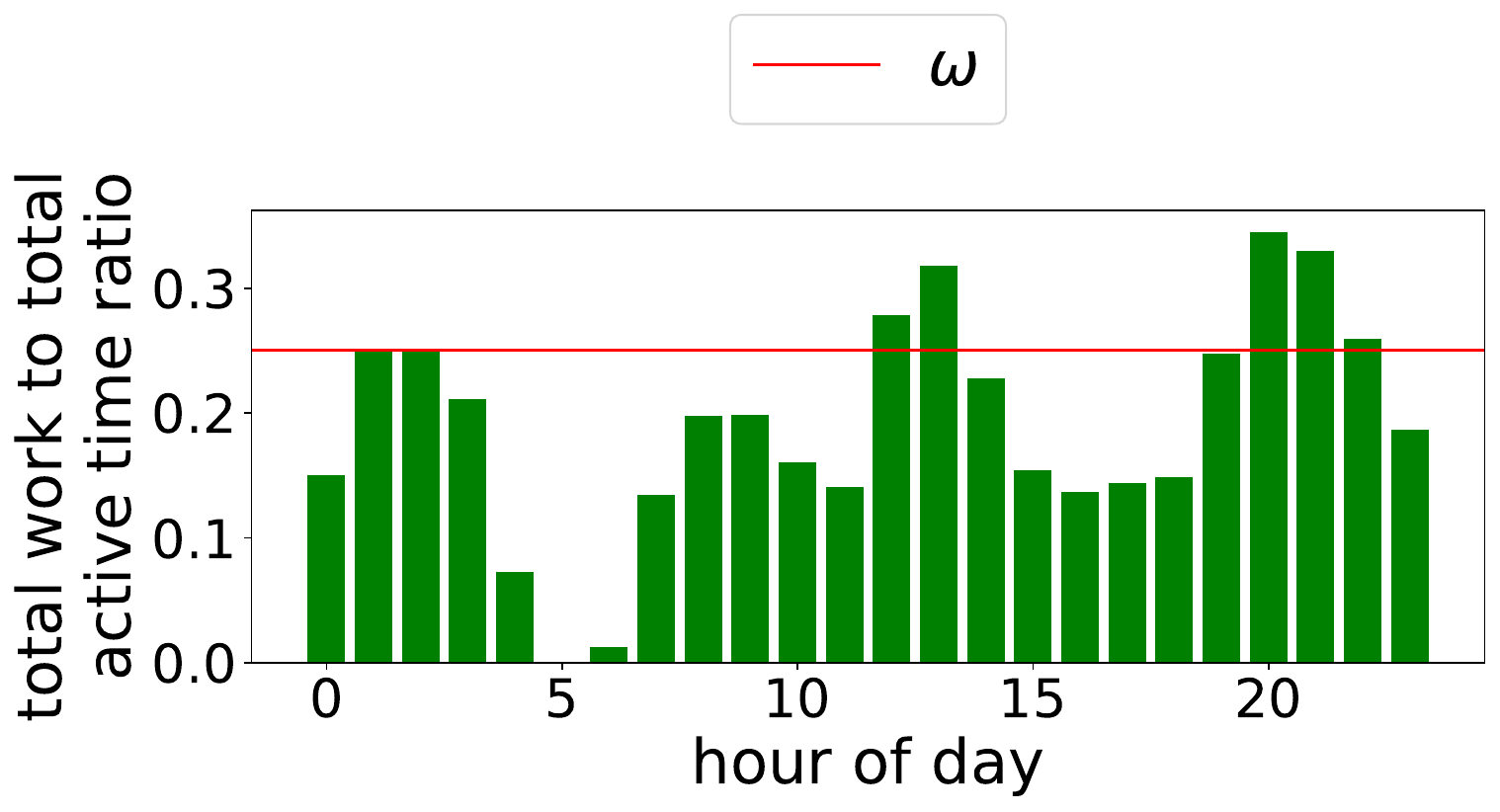}
% \vspace{-0.10in}
\caption{Total work to total active time at different hours in city B.}%of the day in city B.} %only 2 days
\label{fig:war_hour}
% \vspace{-0.20in}
\end{figure}

%Even as the system guarantees a fixed work ratio to the agents, some agents may have more work opportunities at certain times and specific locations. Figure \ref{fig:war_hour} shows the total work to total active time at different hours of the day in city B with \wgalgo. The platform can be transparent and share this with agents beforehand to build trust. A Gaussian process regression model trained on past runs of the fixed work guarantee model proposes the dynamic wage guarantees for each vehicle.

\subsubsection{Gaussian Process Regression}
Gaussian process regression \textsc{(GPR)} is a \textit{Bayesian regression model} used in geo-spatial and time series interpolation tasks. It makes predictions based on the similarity between known points and unknown points. %GPR is well suited for low dimensional data.

For a given train dataset $(\mathbf{X},Y)$ and test points $\mathbf{X_*}$, \textsc{Gpr} predicts the underlying function $f$. In the basic \textsc{Gpr} model, we have:
\begin{alignat}{2}
\nonumber
\begin{bmatrix}
f(\mathbf{X_*})\\
f(\mathbf{X})
\end{bmatrix}
&\sim
\mathcal{N}\left(0,
\begin{bmatrix}
k(\mathbf{X_*}, \mathbf{X_*}) ~~ k(\mathbf{X_*},\mathbf{X})\\
k(\mathbf{X_*},\mathbf{X})^T ~~ k(\mathbf{X},\mathbf{X})
\end{bmatrix}\right)
& \text{(prior)}\\
\nonumber
y(\mathbf{X}) &\sim \mathcal{N}\left(f(\mathbf{X}), \eta^2I\right) &  \text{(likelihood)}
\end{alignat}

\begin{figure*}[t!]
\centering
% \vspace{-0.25in}
\subfloat[][City A]
{\includegraphics[height=1.54in]{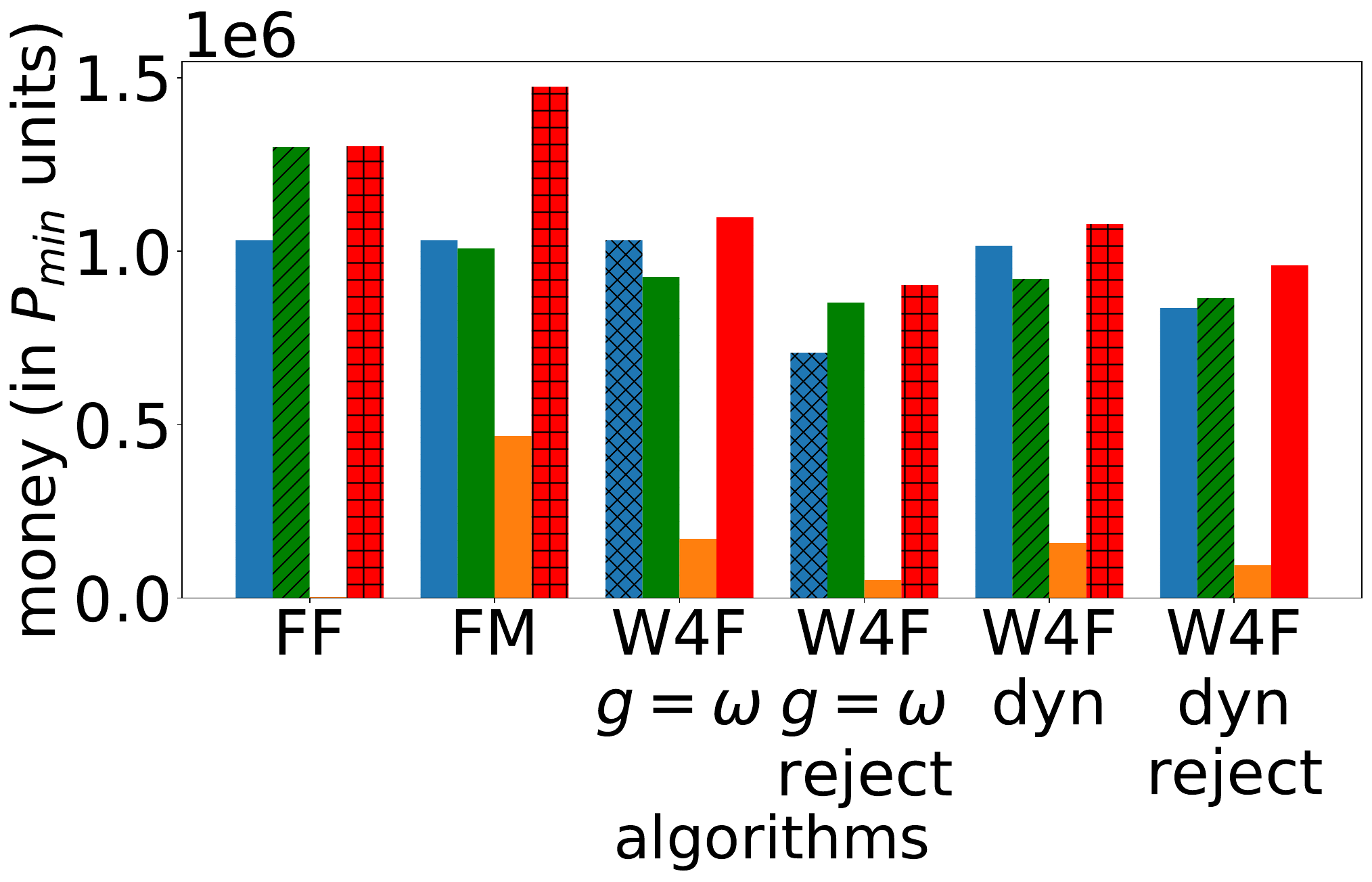}}
\subfloat[][City B]
{\includegraphics[height=1.54in]{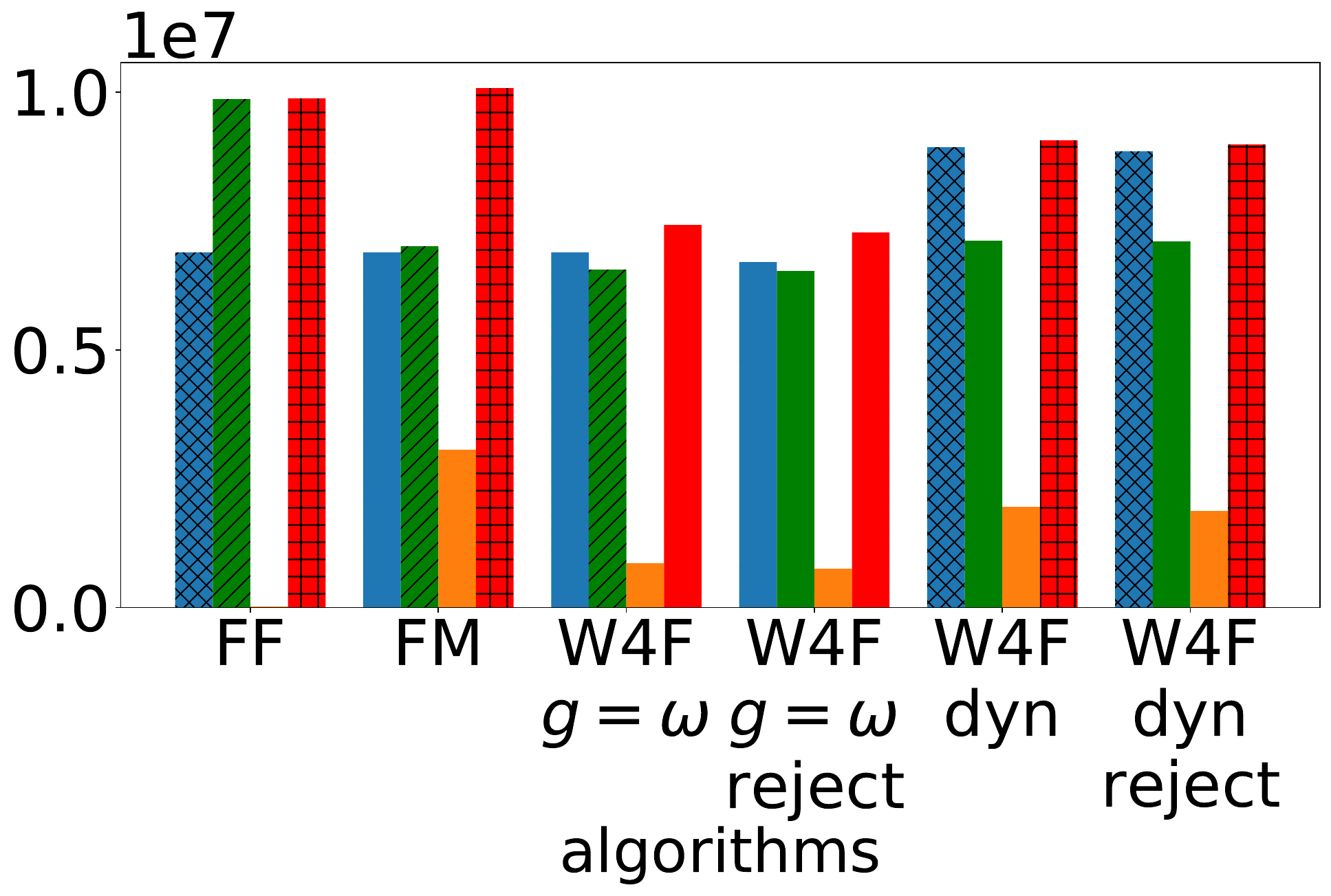}}
\subfloat[][City C]
{\includegraphics[height=1.54in]{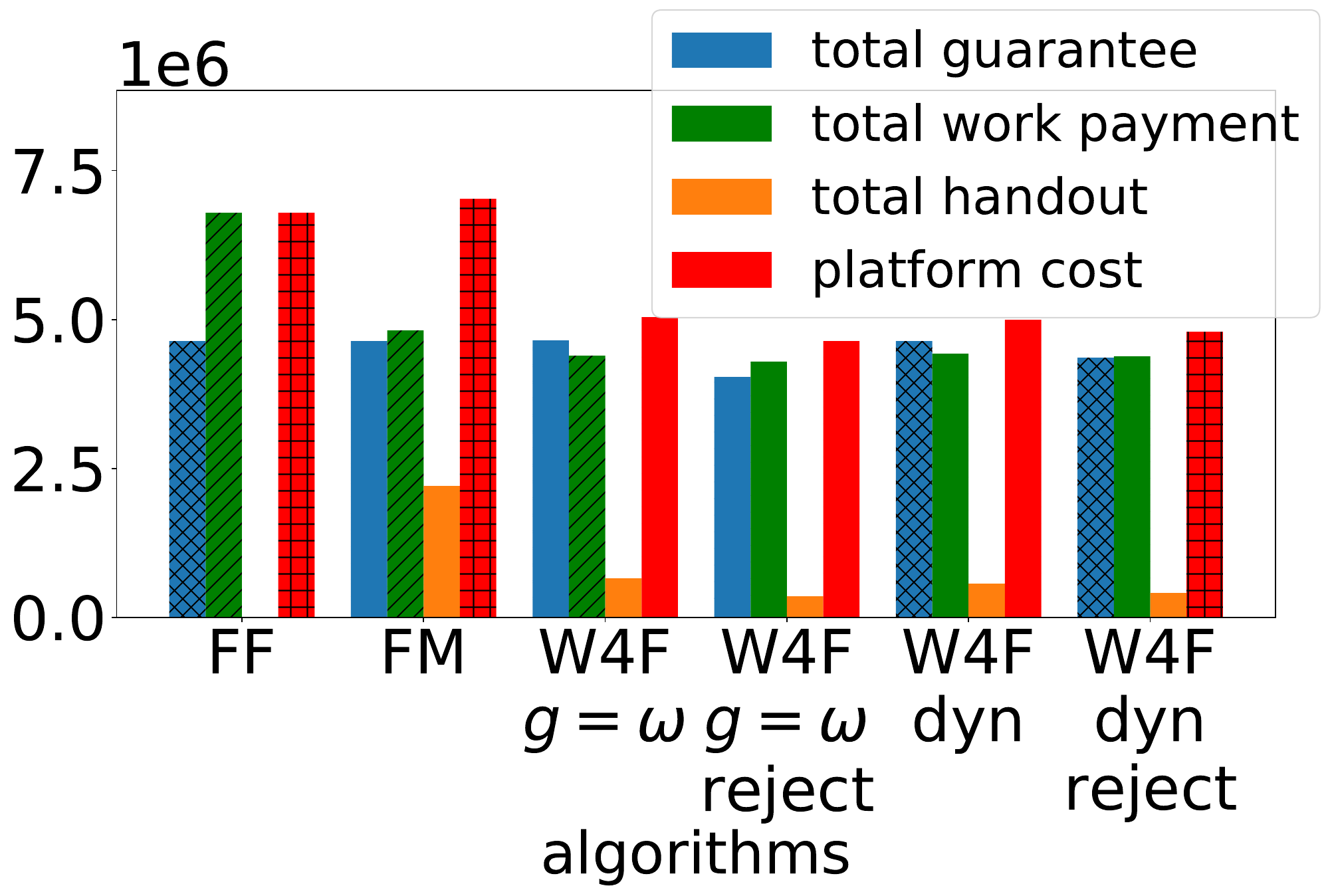}}
% \vspace{-0.10in}
\caption{Cost analysis of different algorithms.} 
\label{fig:cost_analysis_diff_algos}
% \vspace{-0.20in}
\end{figure*}

$y(\mathbf{X})$ and $f(\mathbf{X})$ are random variable vectors. Training data $Y$ is modeled as samples of $y(\mathbf{X})$. $k(x,x')=\sigma^2 \cdot exp(-(x-x')^2/2l^2)$ is the \textit{kernel function} and $\eta^2I$ is the noise co-variance. $\sigma$, $l$ and $\eta$ are model parameters \textit{learned} using \textit{gradient descent} to best explain the training data $(\mathbf{X},Y)$. The predictions of the \textsc{Gpr} model are made by computing $p(f(\mathbf{X_*})\mid y)$ by marginalizing $f(\mathbf{X})$ from $p(f(\mathbf{X_*}), f(\mathbf{X})\mid y)$. We use the \textit{Sparse Variational} version of the \textsc{Gpr} (with \textit{Cholesky Variational Distribution}) for better running times on the large dataset. Variational inference uses a new distribution $q(f(\mathbf{\mathbf{X_*}}), f(\mathbf{X}))$, called the \textit{variational distribution}, to approximate $p(f(\mathbf{X_*}), f(\mathbf{X})\mid y))$. In Sparse \textsc{Gpr}, inducing locations $X_s$ are found to summarize the training data.

In our setting, the \textsc{Gpr} predicts the total work $W_v=\mathbf{X_*}$ an agent $v$ will get during its active time, which is used for getting $\war_v$. We characterize each agent $v\in V^t$ with the feature set $\mathbf{X}=\{\mathbf{X}^v\mid v\in V^t\}$, where $\mathbf{X}^v$ is a vector containing log-in/log-off time, log-in location coordinates in terms of latitude and longitude, number of agents currently in the system, and the number of orders per window. Other features may also be used based on the factors influencing allocation (ex. driver rating).

We note that
%since order assignment may not be highly predictable, using a model which 'interpolates' to give an estimate may be a better idea. Also, note that
we only need an estimate of the available work to provide a guarantee that can be met. Getting the prediction exactly correct is not required. Hence, \textsc{Gpr} is an appropriate choice. In addition, \textsc{Gpr} does not require us to set the functional form of the prediction model or to tune any hyper-parameters. %Finally, we note that if the allocation algorithm considers agent-specific features such as $v$'s ratings, age, etc. to customize edge weights in the bipartite matching component, those features may also be fed to \textsc{Gpr} to generate more accurate predictions.

\noindent
\textbf{Balancing agent-order dynamics:} \fmalgo~\cite{foodmatch} shows that platforms tend to engage more agents than required particularly, on slots that do not correspond to lunch and dinner hours. Over-provisioning drivers results in lower guarantees and consequently may violate the minimum wage guarantee as mandated by law. The prediction from \textsc{Gpr} enables platforms to on-board the optimal number of drivers. Specifically, the system would on-board drivers as long as the expected pay per hour, i.e., $\war_v\times p$, is higher than the minimum pay per hour. 

%If there are more agents than needed at certain times of the day, it is wasteful for the platform to provide minimum work guarantees. So, the platform may choose not to onboard some agents to ensure that all agents in the system can meet their work guarantees without needing handouts. Any agent whose predicted total work when trying to join the system is less than their minimum work requirement is thus rejected. The Gaussian process regression model is used to predict the total work of an agent.

% \vspace{-0.10in}
\section{Empirical Evaluation}
\label{sec:experiments}
% \vspace{-0.05in}
In this section, we benchmark \wgalgo against the baselines of \fmalgo~\cite{foodmatch} and \ffalgo~\cite{fairfoody} and establish that:
\begin{itemize}
    \item \textbf{Practicality:} Compared to \fmalgo and \ffalgo, \wgalgo provides a more practical balance between delivery times and cost-efficiency of the platform.
    \item \textbf{Fairness and sustainability:} \wgalgo generates fair allocations without the pitfalls of strenuous work distribution among agents or additional greenhouse emissions.
\end{itemize}
Our codebase is available at \url{https://github.com/idea-iitd/Work4Food}. Algorithm and simulation implementations are in C\texttt{++} and the Gaussian process regression training is in Python. Our experiments are performed on a machine with Intel(R) Xeon(R) CPU @ 2.10GHz with 252GB RAM on Ubuntu 18.04.3 LTS. 
% Details on the experimental setup is provided at App.~\ref{app:setup} in the Supplementary. 
The datasets are described in \S~\ref{sec:dataset}. We take two days (one weekday and one weekend day) of data to train the \textsc{Gpr} model in each city. We consider both a weekday and a weekend day to learn under different order densities.
\begin{comment}
\subsection{Offline Training of the \textsc{Gpr} Model}
We take two days (one weekday and one weekend day) of data to train the \textsc{Gpr} model in each city. We consider both a weekday and a weekend day to learn under different order densities. To learn under different driver availability, we artificially reduce the number of drivers to 60\%, 70\%, 80\%, and 90\%. We generate the training data by running the fixed guarantee model at $\wgr=\war$. So, we base agent rejections and dynamic guarantees in the other models on the performance of the fixed guarantee model. The trained model is then saved and used by \wgalgo on the remaining four days. We use GPyTorch \cite{gardner2018gpytorch} to implement our \textsc{Gpr} model.
\end{comment}
% \subsubsection{Online Pipeline}

% \begin{algorithm}[tb]
% \caption{\wgalgo - Fixed Guarantee}
% \label{alg:algorithm}
% \textbf{Input}: Order stream $O$, Agents and their active intervals $V$\\
% \textbf{Parameter}: Work guarantee $\wgr$
% \begin{algorithmic}[1]
% \FOR{each window $t$}
% \STATE Let $O^t$ be all orders in t and unassigned past orders.
% \STATE Get all active agents in the window $V^t$.
% \STATE Batch orders $O^t$ to get $B^t$.
% \STATE Check constraints between batches and agents.
% \STATE Calculate edge weights $e(v,b)$.
% \STATE Match batches to vehicles.
% \ENDFOR
% \end{algorithmic}
% \end{algorithm}

% \vspace{-0.05in}
\subsection{Cost to Platform}
%The different algorithms are compared based on several properties like platform cost, delivery times, fairness in work and wage distribution, running times, SLA violations, work times, and CO$_2$ emissions.
 Fig.~\ref{fig:cost_analysis_diff_algos} analyses the monetary cost of each baseline and \wgalgo under several payment guarantees across all three cities. The work guarantees (and therefore payment guarantee) used to evaluate \ffalgo (FF) and \fmalgo (FM) are set to $\wgr_v=\war$ (Recall Theorem~\ref{thm:war}). Hence, the ``total guarantee'' bars for the first three algorithms are the same in Fig.~\ref{fig:cost_analysis_diff_algos}. Recall, any unmet payment guarantees are compensated through handouts. 

The platform cost (red bar) is up to $25\%$ lower with \wgalgo (with $\wgr=\war$) compared to \fmalgo and \ffalgo. This highlights the efficacy of \wgalgo in keeping platform cost low through explicit modeling of pay gap in edge weights (Eq.~\ref{eq:ew}). The cost is highest in \fmalgo since its allocations are skewed towards a minority of agents resulting in large handouts to the remaining agents.

We next focus on the variations of \wgalgo where agents are rejected if payment guarantees are predicted to not be met (reject) as per \textsc{Gpr} and setting $\war$ dynamically through \textsc{Gpr}. We note that when driver rejection is allowed, the handout component decreases by up to $70\%$, and thereby making the system even more cost-efficient. When $\wgr$ is set dynamically based on demand and supply, the income for agents increases. While this naturally leads to an increase in cost to platform (although remains substantially lower than \fmalgo and \ffalgo), dynamic guarantees provide a more transparent working environment for agents.

\subsection{Delivery Times, Equitability and Environmental Impact}
\begin{figure*}[t]
% \vspace{-0.20in}
\begin{minipage}{7in}
% \begin{minipage}[b]{4.5in}
\centering
\scalebox{0.95}{
\begin{tabular}{llrrrrrr}
\toprule
\multirow{2}{*}{\textbf{City}} & \multirow{2}{*}{\textbf{Property}} & \multirow{2}{*}{\textbf{FF}} & \multirow{2}{*}{\textbf{FM}} & \multicolumn{4}{c}{\textbf{W4F}}\\
 &  &  &  & $\wgr=\war$ & $\wgr=\war$ reject & dynamic & dynamic reject\\
\midrule
\multirow{7}{*}{\textbf{A}} & Avg. Delivery Time & 15.98 mins & 15.65 mins & 16.45 mins & 17.1 mins & 16.48 mins & 16.97 mins\\
& SLA Violations (\%) & 0.12 & 0.02 & 0.15 & 0.34 & 0.13 & 0.28\\
& Gini Income/Active Time & 0.07  & 0.6 & 0.1 & 0.24 & 0.25 & 0.36\\
& Gini Work for Min. Wage & 0  & 0.31 & 0.11 & 0.05 & 0.11 & 0.04\\
& Avg. Work per Agent & 2.36 hrs & 1.83 hrs & 1.7 hrs & 1.94 hrs & 1.68 hrs & 1.98 hrs\\
& CO$_2$ Emission & 68 tonnes & 46 tonnes & 46 tonnes & 41 tonnes & 46 tonnes & 42 tonnes\\
& Running Time per Window & 3.3 s & 0.7 s & 2.6 s & 2.5 s & 2.2 s & 2.1 s\\
& Window Overflows (\%) & 0 & 0 & 0 & 0 & 0 & 0\\
\midrule
\multirow{7}{*}{\textbf{B}} & Avg. Delivery Time & 16.08 mins & 15.91 mins & 16.68 mins & 16.7 mins & 16.51 mins & 16.51 mins\\
& SLA Violations (\%) & 0.19 & 0.15 & 0.26 & 0.27 & 0.27 & 0.27\\
& Gini Income/Active Time & 0.09 & 0.63 & 0.11 & 0.12 & 0.33 & 0.33\\
& Gini Work for Min. Wage & 0.01 & 0.29 & 0.09 & 0.08 & 0.08 & 0.07\\
& Avg. Work per Agent & 3.26 hrs & 2.31 hrs & 2.19 hrs & 2.21 hrs & 2.37 hrs & 2.40 hrs\\
& CO$_2$ Emission & 364 tonnes & 245 tonnes & 239 tonnes & 238 tonnes & 256 tonnes & 255 tonnes\\
& Running Time per Window & 25.5 s & 11.4 s & 19.8 s & 19.5 s & 19.2 s & 19.1 s\\
& Window Overflows (\%) & 0 & 0 & 0 & 0 & 0 & 0\\
\midrule
\multirow{7}{*}{\textbf{C}} & Avg. Delivery Time & 16.82 mins & 16.53 mins & 17.35 mins & 17.47 mins & 17.28 mins & 17.34 mins\\
& SLA Violations (\%) & 0.39 & 0.28 & 0.56 & 0.60 & 0.55 & 0.57\\
& Gini Income/Active Time & 0.1 & 0.65  & 0.1 & 0.17 & 0.25 & 0.31 \\
& Gini Work for Min. Wage & 0  & 0.32 & 0.1 & 0.06 & 0.1 & 0.06\\
& Avg. Work per Agent & 2.90  hrs & 2.06  hrs & 1.90  hrs & 1.92  hrs & 1.92  hrs & 1.96 hrs\\
& CO$_2$ Emission & 293 tonnes & 186 tonnes & 183 tonnes & 178 tonnes & 184 tonnes & 182 tonnes\\
& Running Time per Window & 31.9 s & 8.9 s & 23.1 s & 23.1 s & 22.2 s & 21.6 s \\
& Window Overflows (\%) & 0 & 0 & 0 & 0 & 0 & 0\\
\bottomrule
\end{tabular}}
\captionof{table}{Performance analysis.} 
\label{tab:performace_analysis}
\end{minipage}
% \hspace{0.3in}
\end{figure*}
% \begin{minipage}[b]{2in}
\begin{table}[t]
\centering
\scalebox{0.9}{
\begin{tabular}{lrrr}
\toprule
\multirow{2}{*}{\textbf{Property}}  & \multirow{2}{*}{\textbf{FF}} & \textbf{W4F}\\
  &  & $g=4\war/3$\\
\midrule
Avg. Delivery Time  &  15.98 mins &  16.14 mins   \\
Gini Income/log-in Time & 0.07  & 0.02  \\
Avg. Work per Agent &  2.36 hrs  & 1.89 hrs \\
\bottomrule
\end{tabular}}
% \vspace{-0.05in}
\captionof{table}{Performance of \ffalgo v/s \wgalgo in city A.}
\label{tab:ff_wg}
\end{table}
% \vspace{0.15in}
\begin{figure}[t ]
\centering
\includegraphics[width=0.9\linewidth]{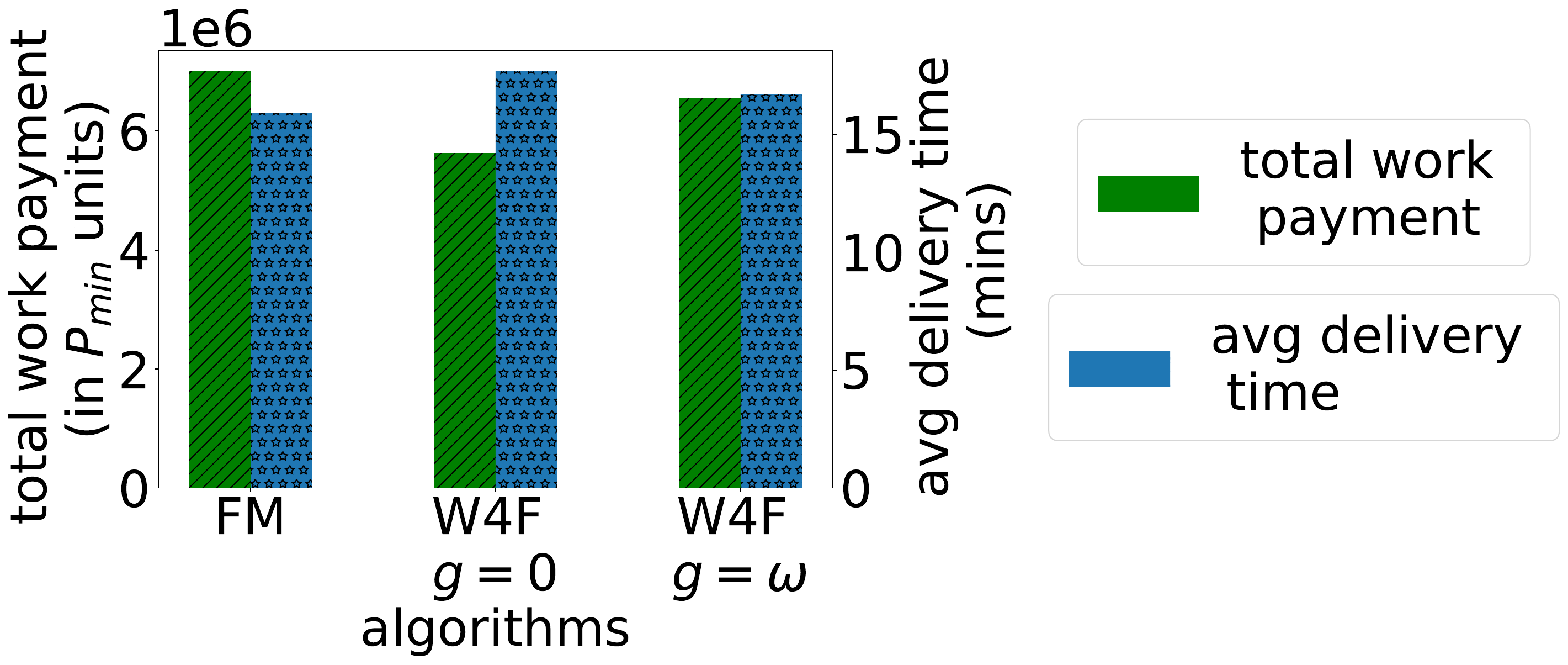}
% \vspace{-0.30in}
\captionof{figure}{Comparing \wgalgo with no guarantee in city B.}
\label{fig:cost_delivery_time_analysis_diff_algos}
\end{figure}
% \end{minipage}
% \vspace{-0.10in}
%\caption{(a) Evaluation of \wgalgo on delivery time, equitability and environmental impact. (b)Running time of \wgalgo.}
% \vspace{-0.10in}
% \end{minipage}
% \end{figure*}

Table \ref{tab:performace_analysis} evaluates \wgalgo and baselines on the above mentioned metrics. The below analysis establishes that \wgalgo provides the best balance between cost-efficiency, delivery time, equitability of work and income, and environmental impact.

\noindent
\textbf{Delivery time:} \fmalgo has the lowest delivery times since it specifically minimizes this factor. However, \wgalgo does not perform poorly. Specifically, all of its variations across all cities are within 1.5 mins of \fmalgo. The agent reject versions of \wgalgo have only slightly higher delivery times, even with a reduced set of agents available for delivery. It points to how the platform keeps under-worked agents in the system even when there is no real impact on customer experience. 
 
 Typically, the food delivery platform provides service level agreements (SLAs), which mandate the time within which each order must be delivered. In our dataset, the SLA was $45$ minutes. We measure the number of SLA violations by each algorithm. As visible in Table~\ref{tab:performace_analysis}, SLA violation is not a concern for any algorithm since it is always below $1\%$.

\noindent
\textbf{Equitability:} Inequality among agents is measured using the Gini score. Lower the Gini, better the equality among agents. We calculate Gini across two component distributions: \textbf{(1)} agents' incomes per unit active time and \textbf{(2)} work done by agents to get to the minimum payment guarantee. 
The results reveal several interesting insights. First, \fmalgo has a highly unequal distribution of work and pay, as seen from the Gini Scores. The Gini income per active time scores of \wgalgo with a fixed guarantee is close to that of \ffalgo, which specifically optimizes the income distribution. However, as well see later, while the excellent Gini of \ffalgo comes at the cost high CO$_2$ emissions and strenuous workload, \wgalgo does not suffer from these issues. %Note again that \fmalgo and \ffalgo do not have any handouts for unmet guarantees. 
Finally, we note that rejecting agents predicted to get low work leads to better equality among agents in terms of the work they do to earn minimum wage. % as handing out free money to under-worked agents would affect this score. 

\noindent
\textbf{Environmental impact:} Even as \ffalgo fairly distributes work, it forces agents to work 30-40\% more on average than \fmalgo, thus making the system highly inefficient. It also leads to higher fuel costs for agents and causes a negative environmental impact. We measure the environmental impact using CO$_2$ emissions calculated from the total distance traveled by agents~\cite{co2_emissions}. \wgalgo, on the other hand, does not lead to higher work times or CO$_2$ emissions to achieve better wage fairness and guarantees. 

\noindent
\textbf{Efficiency:}
In any stream processing algorithm, the number of requests processed per unit of time should be higher than the number of incoming requests in the same period. All three algorithms of \fmalgo, \ffalgo and \wgalgo process the stream in windows of length 3 minutes. We therefore call a window \textit{overflown} if the time taken to process all requests in a window is longer than 3 minutes. In Table~\ref{tab:performace_analysis}, we observe that the number of overflown windows is $0$ across all algorithms in all three cities. Hence, efficiency is not a cause of concern for all three algorithms.
%the running time per windows presents the time taken for each algorithm. The processing and assignment of orders in a window must happen within the window length, or else we say there is a window overflow. All the algorithms we compare have no window overflows. The average running time of windows in \wgalgo is lower than \fmalgo.
% \vspace{-0.05in}
\subsection{Flexibility of \wgalgo}
\begin{comment}
\begin{table}[b]
\centering
\scalebox{0.9}{
\begin{tabular}{lrrr}
\toprule
\multirow{2}{*}{\textbf{Property}}  & \multirow{2}{*}{\textbf{FF}} & \textbf{W4F}\\
  &  & $g=4\war/3$\\
\midrule
Avg. Delivery Time  &  15.98 mins &  16.14 mins   \\
Gini Income/log-in Time & 0.07  & 0.02  \\
Avg. Work per Agent &  2.36 hrs  & 1.89 hrs \\
\bottomrule
\end{tabular}}
\caption{Performance of \ffalgo v/s \wgalgo in city A.} 
\label{tab:ff_wg}
\end{table}
\end{comment}
Fine-tuning the work guarantee parameter $\wgr$ allows customization of \wgalgo towards various needs. In the next discussion, we highlight some of these aspects.

\noindent
\textbf{Fairness:} The work to active time ratio on average in \ffalgo is $4/3$ times that of \wgalgo. We run \wgalgo at $\wgr = 4\war/3$ and compare the results with \ffalgo in Table~\ref{tab:ff_wg} (Recall $\war$ from Theorem~\ref{thm:war}). We observe that \wgalgo at $\wgr = 4\war/3$ not only provides better income equality, but also more relaxed agent workload. These benefits of \wgalgo do not come at the cost of delivery time as it only increases by a minuscule 9 seconds.

\noindent
 \textbf{Platform Cost:} What happens if we provide no work guarantee, i.e., $\wgr=0$?  Fig.~\ref{fig:cost_delivery_time_analysis_diff_algos} answers the question. With no guarantees, \wgalgo is $20\%$ cheaper than \fmalgo but has 1 min 45 s higher average delivery time. This variation of the algorithm can reduce the platform cost and is the cheapest of all algorithms compared. %At this setting, we also note that drivers work for 20\% lesser time on average.

\noindent
\textbf{Ratings-based work guarantee:} One could argue that income equitability is \textit{not} fair. Rather, it should be proportional to the service quality of the agent. We next demonstrate that \wgalgo can easily adapt to this need. 

Unfortunately, our dataset does not contain any rating information. Hence, we simulate ratings by assigning each agent a score uniformly at random from 1 to 5. Based on their ratings, agents are provided a higher or lower work guarantee around the mean $\war$ with the formula: $\wgr_v= (1 + 0.1\times(v.rating-3)) \times \war$. More simply, the highest-rated agents get a $20\%$ higher guarantee than the mean, and the lowest-rated agents get a $20\%$ lower guarantee. We then run \wgalgo with these rating-based agent-specific work guarantees and check whether we are able to satisfy them. 
%Even as we now have a higher control on agent work distribution, there is no adverse effect on any performance or cost measures like platform cost, handouts, and delivery times which remain almost the same as the fixed guarantee model. 
Table~\ref{tab:ratings} presents the results. As we can see, the guarantee and the eventual work per unit time are very closely aligned and thereby validating the efficacy of \wgalgo. 

\begin{table}[t]
\centering
% \vspace{-0.20in}
\scalebox{0.9}{
\begin{tabular}{l|rrrrr}
\toprule
\textbf{Rating}  & 1 & 2 & 3 & 4 & 5 \\
\textbf{$\wgr_v$ of agents} & 0.2 & 0.225 & 0.25 = $\war$ & 0.275 & 0.3\\
\textbf{Average} $\wt_v/\at_v$ & 0.212 & 0.232 & 0.249 & 0.273 & 0.293\\
\bottomrule
\end{tabular}}
% \vspace{-0.10in}
\caption{Performance of \wgalgo with ratings in city B.} 
\label{tab:ratings}
\end{table}

%\input{related}
% \vspace{-0.05in}
\section{Conclusion}
\wgalgo provides a mechanism for food delivery platforms to control and guarantee agent wages while at the same time optimizing their costs and maintaining reasonable delivery times. Optimizing platform cost also leads to a more efficient system with agents not being forced to work more to earn fair wages. It leads to savings in fuel costs and avoids negative environmental impact in order to provide better wages to agents. We also show that by tuning the work guarantee parameter of \wgalgo, it can be customized to address various objectives including minimizing cost, optimizing delivery times, or providing guarantees based on agent performance. Most importantly, \wgalgo injects transparency into the system where agents know the income promised to them and the service provider is assured that cost would be minimized without hampering delivery times.

\clearpage
{
%\scriptsize
\bibliographystyle{named}
\bibliography{main}
}
% \bibliography{main}
% \input{FairnessGuaranteeIJCAI-ECAI-2022/appendix}
\end{document}